\documentclass[twoside]{article}

\usepackage{arxiv}

\usepackage[round]{natbib}


\usepackage[utf8]{inputenc} 
\usepackage[T1]{fontenc}    
\usepackage{hyperref}       
\usepackage{url}            
\usepackage{booktabs}       
\usepackage{amsfonts}       
\usepackage{nicefrac}       
\usepackage{microtype}      
\usepackage{xcolor}         

\usepackage{amssymb}
\usepackage{mathtools}
\mathtoolsset{showonlyrefs}
\usepackage{amsthm}
\usepackage{bbm}
\usepackage{caption}
\usepackage{subcaption}
\usepackage[pdftex]{graphicx}

\newtheorem{theorem}{Theorem}[section]

\newcommand{\Probtest}{\mathbb{P}}

\newcommand{\probtest}{p}
\newcommand{\prob}{\widetilde{p}}
\newcommand{\Etest}{\mathbb{E}}

\newcommand{\dec}{x}
\newcommand{\decset}{\mathcal{X}}
\newcommand{\decpareto}{\decset^\alpha(\cov)}

\newcommand{\cov}{\mathbf{z}}
\newcommand{\covsc}{z}
\newcommand{\y}{y}
\newcommand{\yall}{\mathbf{y}}
\newcommand{\yset}{\mathcal{Y}}
\newcommand{\ypareto}{\yset^\alpha(\cov)}
\newcommand{\yI}{y_1}
\newcommand{\yII}{y_2}
\newcommand{\quant}{q}
\newcommand{\score}{r}
\newcommand{\smallprob}{\alpha}

\newcommand{\testsample}[1]{(\dec_{#1}, \y_{#1}, \cov_{#1})}
\newcommand{\weight}[1]{w(\dec_{#1}, \cov_{#1})}

\newcommand{\bound}[1]{#1^{\smallprob}}
\newcommand{\boundFull}[1]{\bound{#1}(\dec, \cov)}
\newcommand{\boundFlex}[2]{\bound{#1}(\dec_{#2}, \cov_{#2})}

\newcommand{\estquant}[1]{\boundFlex{\widehat{\quant}}{#1}}
\newcommand{\truequant}[1]{\boundFlex{\quant}{#1}}
\newcommand{\empquant}{\kappa^{\smallprob}}

\newcommand{\setcal}{\mathcal{D}''}
\newcommand{\settrain}{\mathcal{D}'}
\newcommand{\setdata}{\mathcal{D}}
\newcommand{\setwithtest}{\mathcal{D}_{+}}
\newcommand{\usetwithtest}{\mathcal{S}_{+}}

\newcommand{\event}{e}

\newcommand{\normal}{\mathcal{N}}
\newcommand{\transpose}{\text{T}}
\newcommand{\ind}[1]{\mathbbm{1}_{#1}}

\begin{document}
%

%

\twocolumn[

\aistatstitle{Learning Pareto-Efficient Decisions with Confidence}

\aistatsauthor{ Sofia Ek \And Dave Zachariah \And  Petre Stoica }

\aistatsaddress{Uppsala University, Sweden \And Uppsala University, Sweden \And Uppsala University, Sweden} ]





\begin{abstract}
The paper considers the problem of multi-objective decision support when outcomes are uncertain. We extend the concept of Pareto-efficient decisions to take into account the uncertainty of decision outcomes across varying contexts. This enables quantifying trade-offs between decisions in terms of tail outcomes that are relevant in safety-critical applications. We propose a method for learning efficient decisions with statistical confidence, building on results from the conformal prediction literature. The method adapts to weak or nonexistent context covariate overlap and its statistical guarantees are evaluated using both synthetic and real data. 
\end{abstract}

\section{INTRODUCTION}
In many decision-making tasks, it is important to balance \emph{trade-offs} between several outcome objectives. For instance, a chosen drug can have intended effects but also side effects; a portfolio can be associated with various risks; and a cost reduction in  production may have environmental impacts. These trade-offs can be formally studied by considering an optimization problem with $m$ different objectives:
\begin{alignat}{3}
&\max_{\dec \in \decset} \quad && \big( f_1(\dec), f_2(\dec), \dots, f_m(\dec) \big),
\label{eq:multiobjective}
\end{alignat}
where $\dec$ is a decision variable and $f_k(\dec)$ is the resulting  objective for outcome $k$ that we wish to maximize. A feasible decision $\dec^*$ strictly dominates an alternative decision $\dec$ if it is superior in at least one objective and not inferior in any other objective. It is said to be \emph{Pareto-efficient} if it is not strictly dominated by any other decision \citep{miettinen2012nonlinear, jin2008pareto, debreu1954valuation}. By evaluating the objectives in \eqref{eq:multiobjective} for each Pareto-efficient decision $\dec^*$ we can analyze the trade-offs between different outcomes in a specific decision problem.

We are here interested in problems where the objective functions in \eqref{eq:multiobjective} are \emph{unknown} but we have access to data from past outcomes of decisions. This data is often subject to uncertainty. We therefore consider  outcome $k$ from decision $\dec$ to have a \emph{random reward}  $\y_k$. In this paper we address the problem of learning Pareto-efficient decisions with random rewards from past training data.

One approach is to consider the mean reward (conditioned on the decision) as an objective function: $f_k(\dec) \equiv \Etest[\y_k|\dec]$, which can be approximated by a model learned from data.  \cite{rolf2020balancing} balance two mean objectives that are averaged over randomized decisions in online recommendation systems and data collection in sensitive ecosystems.

However, mean objective functions will not characterize important tail events nor do they represent typical outcomes when the conditional distributions $\probtest(\y_k|\dec)$ are skewed. A substantial portion of outcome rewards may indeed fall below the mean. In safety-critical scenarios, such as clinical decision support with significant negative rewards, mean objective functions are thus inadequate. An additional complication is that an objective function $\widehat{f}_k(\dec)$ learned from data will differ from $f_k(\dec)$ and can thus lead to erroneous decisions. This is aggravated by the fact that past training data differs systematically from future test data obtained after a given intervention. (See below for further details.) The problem is related to that of off-policy learning from logged data for contextual bandit with single objectives  \citep{langford2008epoch,swaminathan2015counterfactual,joachims2018deep,su2020doubly}, but here the past policy is unknown.

In order to tackle these challenges, we seek to endow the concept of Pareto-efficiency with a notion of statistical uncertainty (Section~\ref{sec:problem}). Then we develop a method that learns efficient decisions with confidence from training data (Section~\ref{sec:method}). The proposed method has the following main features:
\begin{itemize}
  \item it considers the lower tail rewards at any specified level,
  \item it provides finite-sample coverage guarantees for the rewards even when the data-generating process is unknown,
  \item and it enables the study of decision trade-offs using a Pareto-frontier that takes reward uncertainties into account.
\end{itemize}
Our proposed method learns models of conditional reward quantiles and leverages results from the conformal prediction literature to identify Pareto-efficient decisions that have statistical validity
\citep{koenker2001quantile, meinshausen2006quantile,vovk2005algorithmic, shafer2008tutorial}. The method is evaluated using both synthetic and real data.

\section{PROBLEM FORMULATION}
\label{sec:problem}

We consider discrete decisions $\dec$ in a space $\decset$. Each decision has $m$ outcome rewards, represented by a vector $\yall$ in a reward space $\yset \subset \mathbb{R}^m$. The number $m$ is typically small, the examples below illustrate cases with $m=2$ rewards. For generality, we consider decisions taken in different contexts described by a $d$-dimensional covariate vector $\cov$ that may affect the outcomes. 

Suppose that for any decision $\dec$, we have a vector $\yall^\alpha(\dec, \cov)$ in reward space that \emph{lower bounds} each reward in $\yall$ across contexts with a  probability of at least $1-\alpha$. Formally, we express this using element-wise inequality as:
\begin{equation}
\Probtest\big(  \:  \yall^\alpha(\dec, \cov) \preceq  \yall \: | \: \dec \:  \big) \; \geq \; 1-\smallprob, \qquad \forall \dec \in \decset.
\label{eq:goal}
\end{equation}
If a nontrivial  bound can be found, we use it to define efficient decisions that take into account the uncertainty of outcomes, including tail events with very low or negative rewards.

We say that decision $\dec^*$ \emph{strictly dominates another decision $\dec$ with confidence $1-\alpha$} if its reward boundary is not inferior,
\begin{equation}
\yall^\alpha(\dec, \cov) \preceq \yall^\alpha(\dec^*, \cov) ,
\end{equation}
\emph{and} that it is superior in at least one outcome $k$ for which $\y^\alpha_k(\dec, \cov) < \y^\alpha_k(\dec^*, \cov)$. The definition above endows the standard notion of Pareto-efficiency with statistical properties. We can now define the object of interest in this paper, $\decpareto$, the set of all \emph{efficient decisions with confidence} $1-\alpha$. Each decision $\dec$ in $\decpareto$ is not dominated by any other decision and the uncertain outcome rewards $\yall$ fall below the boundary $\yall^\alpha(\dec, \cov) $ with a probability no greater than $\alpha$ across contexts. 

Using $\decpareto$ we can construct a corresponding \emph{frontier} 
$\yset^{\smallprob}(\cov)$ in reward space, which enables a user to quantify trade-offs between different efficient decisions. Figure~\ref{fig:Pareto_fronter_leadexample} illustrates such a frontier in an example with a decision space $\decset = \{0, 1, 2, 3, 4\}$ and outcomes rewards $\yall = [\yI, \yII]$. We can see that two of the decisions are strictly dominated with a confidence of 80\%. A user can now analyze the trade-offs between rewards when choosing among the three efficient decisions. (The full setup is described in the numerical experiment section.)

\begin{figure}
\centering
\includegraphics[width=1.0\linewidth]{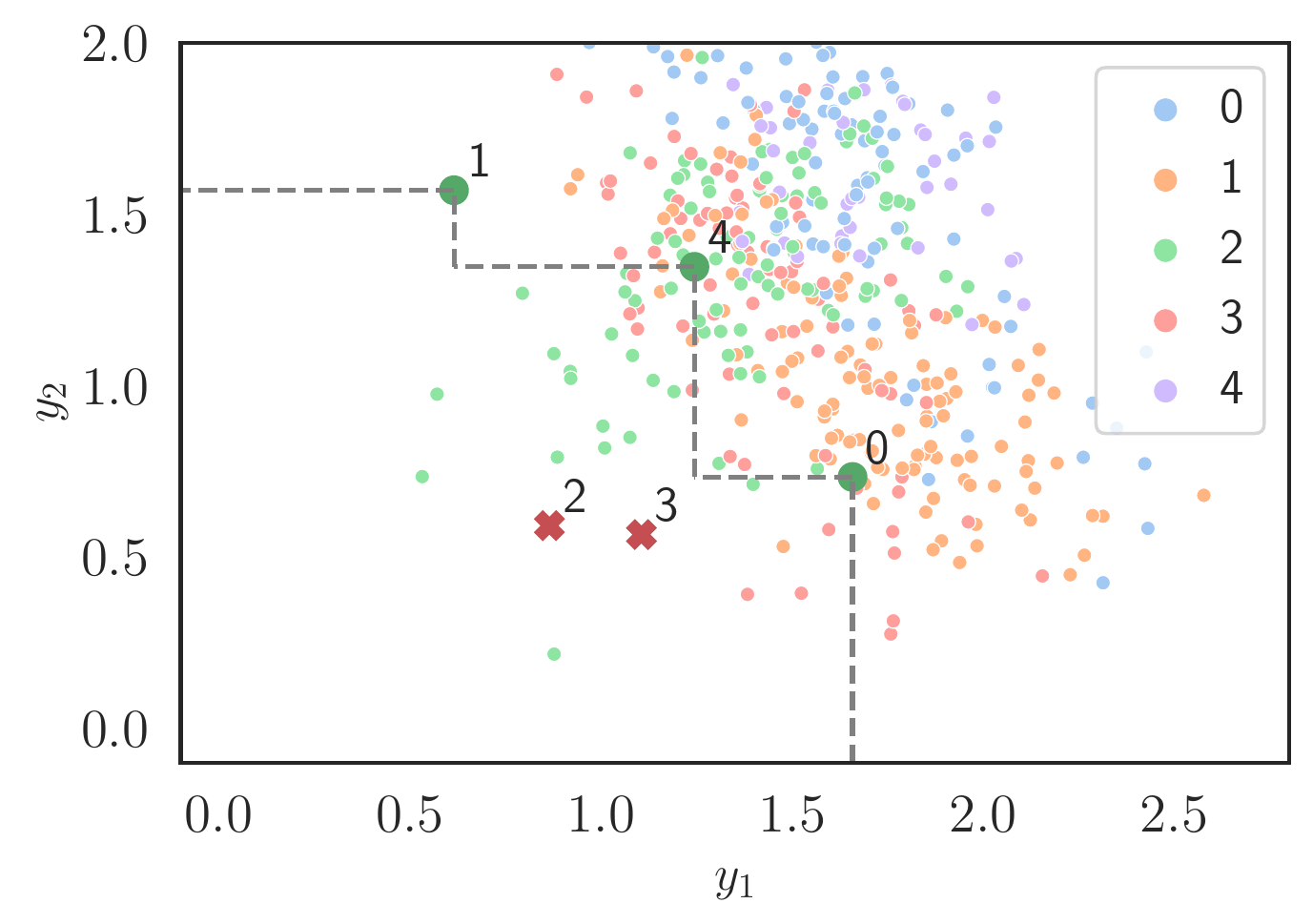} 
\caption{Example of decisions $\dec \in \{ 0,1,2,3,4 \}$ with random outcome rewards $\yall = [\yI, \yII]$. The small dots show past data of rewards observed from decisions taken across different contexts. From this data we learn Pareto-efficient decisions with confidence 80\% across contexts. For a given context specified by $\cov$, this set was $\decpareto = \{ 0, 1,4 \}$ with reward bounds $\yall^\alpha(\dec, \cov)$ shown as solid green dots. Decisions 2 and 3 (red crosses) are thus strictly dominated. The set of efficient decisions and its corresponding reward bounds form a frontier $\yset^{\smallprob}(\cov)$ (dashed) along which a user can study reward trade-offs.}
\label{fig:Pareto_fronter_leadexample}
\end{figure}

The problem we consider in this paper is to \emph{learn} confidence-based efficient decisions $\decpareto$ and their corresponding frontier $\ypareto$ from finite training data. We use past data on decisions, contexts and their rewards $(\dec_i, \yall_i, \cov_i)$ drawn independently and identically from an unknown distribution that admits a causal factorization:
\begin{equation}
\prob(\dec,\yall,\cov) = \probtest(\cov)\probtest(\dec|\cov)\probtest(\yall | \dec, \cov). 
\label{eq:trainingddistribution}
\end{equation}
In the case of past experimental data, the decision policy described by $\probtest(\dec|\cov)$ is known. For instance, in  medical trials, treatment $\dec$ could be assigned at random and independent of context $\cov$. We also consider using past observational data, where $\probtest(\dec|\cov)$ is unknown and has to be learned from data. In this case, one must assume that $\cov$ contains all relevant data and there are no unblocked confounding variables \citep{peters2017elements}. We do \emph{not}, however, need to assume covariate overlap for all decisions. This is an important difference from both causal inference and contextual bandit literatures \citep{imbens2015causal,oberst2020characterization,d2021overlap}.

We shall now proceed to develop a method that learns $\smallprob$-efficient decisions $\decpareto$ from finite training data by constructing nontrivial and context-adaptive bounds $\yall^\alpha(\dec, \cov)$ that satisfy the statistical guarantee \eqref{eq:goal}.

\section{METHOD}
\label{sec:method}


To construct $\decpareto$ we need boundary vectors $\yall^\alpha(\dec,\cov)$ that satisfy
\eqref{eq:goal}. Using the union bound for the complement of the event in  \eqref{eq:goal} we have:
\begin{equation}
\label{eq:union_bound}
\begin{aligned}
&\Probtest\big(  \: \cup^m_{k=1} \y_k < \boundFull{\y_k}    \: | \: \dec \:  \big)
\leq \sum^m_{k=1}  \Probtest( \y_k < \boundFull{\y_k} \: | \: \dec).  
\end{aligned}
\end{equation}
Thus by constructing bounds for each reward that satisfy
\begin{equation}
\Probtest\big(  \y_k < \y^{\smallprob}_k(\dec, \cov) \: | \: \dec  \big)   \leq \frac{\smallprob}{m}, \qquad \forall k,
\label{eq:goal_sub}
\end{equation}
constraint \eqref{eq:goal} is also satisfied.

We consider now the conditional \emph{quantile function} of an individual reward $\y$. For continuous rewards, we can express it as 
\begin{align}
\truequant{} = \inf \left \{ \widetilde{\y} : \frac{\smallprob}{m} = \Probtest(\y < \widetilde{\y} \: | \: \dec, \cov)  \right\}.
\end{align}
Then $\y^{\smallprob}(\dec, \cov) :=\truequant{}$ would provide the tightest lower bound for the reward. But the quantile function is unknown and must be estimated from data
\begin{equation}
\setdata = \big(\testsample{i}\big)^{n}_{i=1},
\end{equation}
drawn from the training distribution \eqref{eq:trainingddistribution}, see, e.g., \cite{koenker2001quantile, meinshausen2006quantile}. However, using an estimate $\y^{\smallprob}(\dec, \cov) := \estquant{}$ in lieu of the unknown function faces two fundamental challenges. First, this choice would not ensure that \eqref{eq:goal} holds for finite $n$. When $\cov$ contains continuous covariates, or is of a moderately large dimension, \eqref{eq:goal} is at best approximately satisfied in the large-sample case provided that we use a well-specified model of the quantile function. Second, the training data is itself a problem because \eqref{eq:trainingddistribution} differs from the distribution that follows when intervening on a \emph{specified} decision variable $\dec := \dec^*$:
\begin{equation}
\probtest(\dec,\yall,\cov) = \probtest(\cov)\mathbbm{1}(\dec=\dec^*)\probtest(\yall | \dec, \cov) ,
\label{eq:testddistribution}
\end{equation}
where $\mathbbm{1}(\cdot)$ is the indicator function.

We will here take a different approach by allowing for \emph{any} flexible machine learning method to fit a quantile function and then form a boundary:
\begin{equation}
    \boxed{\boundFlex{\y}{} := \estquant{} - \empquant(\dec),}
\label{eq:boundary_conformal}
\end{equation}
using an \emph{adjustment} variable $\empquant(\dec)$. By leveraging recent results in the conformal prediction literature \citep{vovk2005algorithmic,lei2015conformal,romanoConformalizedQuantileRegression, tibshirani2019conformal}, we will show that it is possible to learn tight reward boundaries \eqref{eq:boundary_conformal} using flexible methods that ensure the rewards are bounded at the specified probability $1-\smallprob$ in \eqref{eq:goal}. The methodology is based on randomly splitting the data $\setdata$ into two parts, denoted $\settrain$ and $\setcal$.  

Let $\widehat{q}^{\smallprob}(\dec, \cov; \settrain)$ be any fitted quantile function using $\settrain$. Next, define the residuals using the remaining data:
\begin{equation}
  \score_i = \widehat{\quant}^{\smallprob}(\dec_i, \cov_i; \settrain) - \y_i, \qquad i\in \setcal .  
\label{eq:residual}
\end{equation}
Construct a discrete distribution of the residual $\score$:
\begin{equation}
p(\score) = \sum_{i \in \setcal} p_i \ind{}(\score = \score_i) + p_{\infty} \ind{}(\score=\infty),
\label{eq:score_distribution}
\end{equation}
where $\{ p_i \}$ are probability weights.

\begin{theorem}
\label{thm:main}
Consider any $(\dec, \cov)$ drawn from the interventional distribution \eqref{eq:testddistribution} and let the adjustment variable $\empquant(\dec)$ be the $(1 - \frac{\smallprob}{m})$-th quantile of the distribution \eqref{eq:score_distribution} with the following probability weights
\begin{align}
\label{eq:prob_weight_lemma}
p_i &= \frac{\weight{i}}{\sum_{j \in \setcal}\weight{j} + \weight{}}, \\
p_{\infty} &= \frac{\weight{}}{\sum_{j \in \setcal}\weight{j} + \weight{}},
\end{align}
using the function
\begin{equation}
    w(\dec, \cov) = \frac{\mathbbm{1}(\dec=\dec^*)}{\probtest(\dec | \cov)} .
\label{eq:weightfunction}
\end{equation}
Then the boundary \eqref{eq:boundary_conformal} satisfies: 
\begin{equation}
\Probtest ( \y_{} < \y^{\smallprob}(\dec, \cov) | x) \leq \frac{\smallprob}{m}.
\label{eq:theorem}
\end{equation}
Constructing these boundaries for all $m$ rewards therefore satisfies \eqref{eq:goal}.
\end{theorem}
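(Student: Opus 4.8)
The plan is to reduce \eqref{eq:theorem} to the one-sided coverage guarantee of weighted split conformal prediction and then conclude with the union bound \eqref{eq:union_bound}--\eqref{eq:goal_sub}. Write $(\dec_{n+1},\cov_{n+1},\y_{n+1})$ for the interventional test point, so $\dec_{n+1}=\dec^*$, and set $\score_{n+1}=\widehat{\quant}^{\smallprob}(\dec_{n+1},\cov_{n+1};\settrain)-\y_{n+1}$ by the same rule as \eqref{eq:residual}. Because the boundary in \eqref{eq:boundary_conformal} is $\estquant{}-\empquant(\dec)$, the event $\{\y_{n+1}<\y^{\smallprob}(\dec_{n+1},\cov_{n+1})\}$ is exactly $\{\score_{n+1}>\empquant(\dec^*)\}$, so \eqref{eq:theorem} is equivalent to $\Probtest(\score_{n+1}\le\empquant(\dec^*))\ge 1-\smallprob/m$; the conditioning on $\dec$ in \eqref{eq:theorem} is vacuous since $\dec=\dec^*$ almost surely under \eqref{eq:testddistribution}.

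First I would condition on the split $\settrain$, which freezes $\widehat{\quant}^{\smallprob}(\cdot,\cdot;\settrain)$ so that each residual $\score_i$ becomes one fixed measurable map applied to the tuple $(\dec_i,\cov_i,\y_i)$ for $i\in\setcal\cup\{n+1\}$. The structural observation is that this is a covariate-shift problem: the calibration tuples are i.i.d.\ from the training law \eqref{eq:trainingddistribution}, the test tuple follows \eqref{eq:testddistribution}, the two laws share the conditional $\probtest(\yall|\dec,\cov)$, and the likelihood ratio of their $(\dec,\cov)$-marginals is
\[
\frac{\probtest(\cov)\,\mathbbm{1}(\dec=\dec^*)}{\probtest(\cov)\,\probtest(\dec|\cov)}=\frac{\mathbbm{1}(\dec=\dec^*)}{\probtest(\dec|\cov)}=w(\dec,\cov),
\]
exactly the weight function \eqref{eq:weightfunction}. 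Hence the tuples $\{(\dec_i,\cov_i,\y_i)\}_{i\in\setcal\cup\{n+1\}}$ are weighted exchangeable in the sense of \cite{tibshirani2019conformal}, with unit weights on the calibration indices and weight $w$ on the test index; calibration points with $\dec_i\neq\dec^*$ receive zero weight and drop out, which is precisely why no covariate-overlap condition is needed for decisions other than $\dec^*$.

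Given weighted exchangeability, I would invoke the weighted split-conformal coverage bound. Conditioning on the pooled multiset of the $|\setcal|+1$ residual--weight pairs and on $\cov_{n+1}$ (which fixes $w(\dec^*,\cov_{n+1})$ and hence the atom mass $p_\infty$), the identity of the test point among the pooled residuals is distributed according to the weights $p_i$ in \eqref{eq:prob_weight_lemma}; replacing the test point's own residual by the $+\infty$ atom of \eqref{eq:score_distribution} can only increase the $(1-\smallprob/m)$-quantile, so $\empquant(\dec^*)$ is a conservative threshold and $\Probtest(\score_{n+1}\le\empquant(\dec^*))\ge 1-\smallprob/m$ once the conditioning ($\settrain$, $\setcal$, $\cov_{n+1}$) is integrated out; if $p_\infty>\smallprob/m$ this quantile is $+\infty$, giving the vacuous but valid bound $-\infty$. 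This is \eqref{eq:theorem}. Finally, running the same construction for each reward $\y_k$ at level $\smallprob/m$ gives \eqref{eq:goal_sub} for every $k$, and then \eqref{eq:union_bound} yields \eqref{eq:goal}.

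The step I expect to require the most care is the conformal one: fixing the direction of the weighted-quantile inequality, correctly absorbing the unobserved test residual into the $+\infty$ atom, and handling ties in the discrete law \eqref{eq:score_distribution}. It is also worth stating explicitly that the guarantee uses the true propensity $\probtest(\dec|\cov)$; in the observational setting $w$ must be estimated, and then coverage holds only up to an error governed by the quality of the propensity estimate---a known feature of weighted conformal prediction.
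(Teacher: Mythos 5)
Your proposal is correct and follows essentially the same route as the paper's proof: both reduce the claim to weighted exchangeability \`a la Tibshirani et al.\ (2019) with likelihood-ratio weight $w(\dec,\cov)=\mathbbm{1}(\dec=\dec^*)/\probtest(\dec|\cov)$, condition on the pooled unordered set so the test residual's location follows the discrete law with weights \eqref{eq:prob_weight_lemma}, conservatively replace the unobserved test residual by the $+\infty$ atom, marginalize, and finish with the union bound. The only difference is presentational---you invoke the weighted split-conformal coverage bound as a known result where the paper re-derives the permutation-counting lemma explicitly---and your added remarks on the vacuous quantile when $p_\infty>\smallprob/m$ and on estimated propensities match the paper's own discussion.
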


\begin{proof}
The first part of the proof builds on the techniques used Lemma 3 in \cite{tibshirani2019conformal}. We modified it here for the case when $\score$ is only a function of the subset $\settrain$. The second part of the proof builds the techniques used in Theorem 1 and  Corollary 1 in \cite{romanoConformalizedQuantileRegression} and  \cite{tibshirani2019conformal}, respectively.

Let $\setwithtest$ denote the sequence of training data $\setdata$ and a new sample $\testsample{}$ drawn from \eqref{eq:testddistribution}, i.e.,
$$\setwithtest = ( \settrain,  \setcal,\testsample{} ).$$
This sequence has a joint distribution with a density function that can be expressed as
\begin{align}
\prod^n_{i=1} \prob(\dec_i, \y_i, \cov_i) \cdot \prob(\dec, \y, \cov) \weight{}
&= \prob(\setwithtest) \weight{} \\
&= \prob( \usetwithtest ) \weight{},
\end{align}
where $\usetwithtest$ is an unordered set of elements from $\setwithtest$. Note that there are multiple sequences $\setwithtest$ that can be obtained by permuting the elements in $\usetwithtest$.

Let $\event_i$ denote the event that the new sample $\testsample{}$ equals the $i$th sample $\testsample{i}$ in the \emph{sub}sequence
$$\setwithtest'' = ( \setcal, \testsample{} ),$$ 
given the unordered set $\usetwithtest$. The conditional probability of this event can be expressed as
\begin{align}
\Probtest(\event_i|\usetwithtest) 
&= \frac{\Probtest(\event_i , \usetwithtest)}{\sum_{j \in \setcal} \Probtest(\event_j , \usetwithtest)  + \Probtest(\event , \usetwithtest) } \\
&= \frac{\weight{i}}
{\sum_{j \in \setcal}\weight{j} + \weight{}} \\
&= p_i,
\end{align}
where the first equality follows from the law of total probability and the second from considering all permutations $\sigma$ over the set $\usetwithtest$, such that the final element (new sample) in every resulting sequence is equal to $\testsample{i}$. That is,
\begin{align*}
\Probtest(\event_i , \usetwithtest)
&= \sum_{\sigma:\sigma(n+1) = |\settrain| + i} \weight{i}\prob(\usetwithtest) \\
&= \weight{i}\prob(\usetwithtest) n!.
\end{align*}

Each sample $\testsample{i}$  has a residual $\score_i$  of the form \eqref{eq:residual}. The conditional probability that the residual of the new sample $\score$ equals $\score_i$ is equal to
\begin{align*}
\Probtest(\score = \score_i|\usetwithtest) = \Probtest(\event_i | \usetwithtest) = p_i,
\end{align*}
which holds since we condition on the full set $\usetwithtest$ and so $\settrain$ in \eqref{eq:residual} is fixed. Thus the residual $r$ given $\usetwithtest$ follows the discrete distribution:
\begin{align}
\label{eq:lemma_prob}
\sum_{i \in \setcal} p_i \ind{}(\score = \score_i ) + p_{n+1} \ind{} (\score =  \score_{n+1}).
\end{align}
Let $\widetilde{\kappa}^{\smallprob}$ denote its $(1 - \frac{\smallprob}{m})$-th quantile, so that
\begin{equation}
\Probtest \{r > \widetilde{\kappa}^{\smallprob} | \usetwithtest \} \leq \frac{\smallprob}{m}.
\label{eq:inequality_conditioned}
\end{equation}
Now in lieu of \eqref{eq:lemma_prob}, consider \eqref{eq:score_distribution} where the only point mass for the final residual is moved to the limit of its range.  Thus the quantile of \eqref{eq:score_distribution} cannot decrease, $\kappa^{\smallprob} \geq \widetilde{\kappa}^{\smallprob}$, so that
\begin{equation}
\Probtest \{r > {\kappa}^{\smallprob} | \usetwithtest \} \leq \Probtest \{r > \widetilde{\kappa}^{\smallprob} | \usetwithtest \} \leq \frac{\smallprob}{m}.
\end{equation}
Using the fact that $\Probtest \{r > {\kappa}^{\smallprob} | \usetwithtest \} = \Probtest \{y < \widehat{\quant} -  \kappa^{\smallprob} | \usetwithtest \}$, we obtain
\begin{equation}
 \Probtest \{y < \widehat{\quant} -  \kappa^{\smallprob} \}  \leq \frac{\smallprob}{m},
\end{equation}
after marginalization.

Since the interventional distribution \eqref{eq:testddistribution} (and $\weight{})$ contain $\mathbbm{1}(\dec=\dec^*)$, the resulting probability is conditional on any $\dec^* \in \decset$ in \eqref{eq:theorem} since all other decision variables have zero probability. Using \eqref{eq:union_bound} it follows that \eqref{eq:goal} is satisfied.
\end{proof}

\emph{Remark 1:} There is a wide range of possible models and learning methods to fit $\estquant{}$ in \eqref{eq:boundary_conformal}; from simple linear models to highly flexible neural network or random forest-based models, cf. \cite{koenker2001quantile, meinshausen2006quantile}. Whichever method one chooses, the methodology described above ensures that all rewards for each decision will be at least as large as their bounds at a certifiable probability of $1-\smallprob$. The choice of fitted $\estquant{}$ only affects the \emph{tightness} of the bounds.  The efficient decisions $\decpareto$ are thus learned with a model-agnostic confidence of $1-\smallprob$.

\emph{Remark 2:} In the formulation above, we have set an equal probability level of $\smallprob/m$ to each reward. But it is entirely possible to specify different levels across rewards that still satisfy \eqref{eq:goal}. This would be relevant in scenarios where certain rewards are more critical to certify than others. 


\emph{Remark 3:} The conformal prediction literature is focused on the construction of outcome prediction intervals, but we are here only interested in a lower bound on rewards $\yall$. See also  \cite{linusson2014signed}.

\emph{Remark 4:} When the training data is only observational and the decisions are taken with unknown policies the weight function \eqref{eq:weightfunction} must be estimated accurately \citep{leiConformalInferenceCounterfactuals2020}. This can be done by supervised learning of a model of $\probtest(\dec | \cov)$. Alternatively, the weights can be expressed as:
\begin{align}
\weight{}  \equiv  \frac{\ind{}(\dec = \dec^*) }{\probtest(\cov | \dec)\probtest(\dec)} \sum_{\dec' \in \decset }\probtest(\cov|\dec')\probtest(\dec'),
\label{eq:weightfunction_alternative}
\end{align}
where a model of $\probtest(\cov | \dec)$ can be learned in an unsupervised manner.

\section{NUMERICAL EXPERIMENTS}
\label{sec:numerical}
To illustrate the key concepts of our method for learning $\decpareto$ and verify the statistical properties of the frontier $\ypareto$, we conduct numerical experiments with both synthetic and real-world data. For convenience of illustration, we will only consider $m=2$ rewards. We split the training data $\setdata$ in two parts $\settrain$ and $\setcal$ of equal size.  

The conformal inference part of our implementation is based on the code from \cite{romanoConformalizedQuantileRegression} which we extended to account for the distributional shift that arises between training and interventional distributions \eqref{eq:trainingddistribution} and \eqref{eq:testddistribution}.

\subsection{Synthetic data}
\label{sec:synthetic_data}

We consider a problem with five decision alternatives $\decset = \{0, 1, 2, 3, 4\}$ and a one-dimensional covariate $z$. We begin by specifying the unknown training distribution \eqref{eq:trainingddistribution}.

The rewards are drawn from $\probtest(\yall|\dec, z)$ as:
\begin{align}
    \begin{cases}
    \yI &= a_{\dec} + b_{\dec} f \left (\frac{\covsc - 55}{9} \right)   + u_{0},  \\
    \yII &= c_{\dec} + d_{\dec} f \left (\frac{\covsc - 50}{8} \right)   + u_{1},
    \end{cases}
    \label{eq:rewards}
\end{align}
where $f(\cdot)$ is the sigmoid function and the noise terms $u_0$ and $u_1$ are drawn jointly from $\mathcal{N}(0,0.04 \cdot [\begin{smallmatrix}
    1 & \rho \\ 
    \rho & 1
\end{smallmatrix}])$ with a correlation coefficient $\rho = -0.2$ unless another value is explicitly mentioned. (The rewards are ensured to be nonnegative by truncation in the rare cases.) The coefficients in \eqref{eq:rewards} are given in Table~\ref{tab:constants}. The covariate is drawn from $\probtest(\covsc)$ as $\covsc \sim \normal(60, 100)$.

\begin{table}[htbp]
\caption{The constants in \eqref{eq:rewards}.}
\begin{center}
\begin{tabular}{|c|c|c|c|c|c|}
\hline
 & $\dec = 0$ & $\dec = 1$ & $\dec = 2$ & $\dec = 3$ & $\dec = 4$  \\
 \hline
$a_{\dec}$ & 2.4 & 0.7 & 0.8 & 2.0 & 1.2  \\
\hline
$b_{\dec}$ & -1.4 & 1.5 & 1.0 & -1.2 & 1.0 \\
\hline
$c_{\dec}$ & 0.0 & 2.2 & 0.6 & 0.0 & 2.2 \\
\hline
$d_{\dec}$ & 2.4 & -1.5 & 1.0 & 2.0 & -1.0 \\
\hline
\end{tabular}
\label{tab:constants}
\end{center}
\end{table}

In the training data, decisions have been made in different contexts described by $\covsc$. We let the decision assignment be as follows
\begin{align} x = 
    \begin{cases}
    0 \quad \text{if} \quad 0 \leq s < 0.2, \\
    1 \quad \text{if} \quad 0.2 \leq s < 0.4, \\
    2 \quad \text{if} \quad 0.4 \leq s < 0.6, \\
    3 \quad \text{if} \quad 0.6 \leq s < 0.8, \\
    4 \quad \text{if} \quad 0.8 \leq s < 1.0,
    \end{cases}
\end{align}
where $s$ is a random variable. In the case of experimental data, we consider uniform random assignments: $s \sim \mathcal{U} (0, 1)$ and take $\probtest(\dec| \covsc)$ to be known. In the case of observational data, we test the limits of our method by considering an unbalanced scenario where some of the decisions are rare so that the unknown $\probtest(\dec|\covsc)$ is near zero (thus yielding weak covariate overlap). This is done by setting
\begin{align}
    s = u \times f\left (\frac{70 - \covsc}{5}\right ), \text{ where } u \sim \mathcal{U} (0, 1).
\end{align}
We draw $n = 1000$ training data points to learn $\decset^{\smallprob}(\covsc)$ with confidence level $1-\smallprob$ using a quantile random forest method  \citep{meinshausen2006quantile}. We verify that the method satisfies \eqref{eq:goal} at the specified confidence level in the experimental data case in Fig.~\ref{fig:coverage_experimental}.

In the observational data case, we learn a (two-component) Gaussian mixture model of $\probtest(\covsc|\dec)$ in \eqref{eq:weightfunction_alternative}. Then \eqref{eq:goal} will only hold approximately depending on the accuracy of \eqref{eq:weightfunction}, see \cite{leiConformalInferenceCounterfactuals2020}. The accuracy of probability weights can be assessed using model validation methods. In Fig.~\ref{fig:coverage_observational} we evaluate bounds obtained with approximate weights and find \eqref{eq:goal} to be satisfied in this challenging setting with poor data overlap.
In Fig.~\ref{fig:coverage_observational_rho} we also include the case where the outcomes are even more correlated by setting $\rho = -1.0$, and observe only minor differences. In Fig.~\ref{fig:coverage_observational_per_x}, we see that due to the uneven past decision policy, data on decision $\dec = 4$ is sparse and the resulting bound becomes notably more conservative than for the other decisions. The bounds for $\dec=0$ and 1 are the least conservative; the probabilities of rewards exceeding them are very close to $1-\smallprob$.


We illustrate the resulting learned frontier $\yset^{\smallprob}(\covsc)$ for $1-\smallprob = 80\%$ and two different contextual covariates $\covsc=56$ and $\covsc=68$ in Figs.~\ref{fig:4.1.1a} and \ref{fig:4.1.1b}, respectively. The learned efficient decisions for the two different  contexts are:
$\decset^{\smallprob}(56) = \{0,4\}$ and $\decset^{\smallprob}(68) = \{ 0,1,2\}$ and thus are context-dependent. (Figure~\ref{fig:Pareto_fronter_leadexample} shows $\covsc=46$.) The method adapts to the lack of data for decision $\dec = 4$ by setting the lower bound to the minimum possible rewards, represented by a black star in  Fig.~\ref{fig:4.1.1b}.

\begin{figure*}
    \centering
    \begin{subfigure}[b]{0.49\textwidth}
        \centering
        \includegraphics[width=\textwidth]{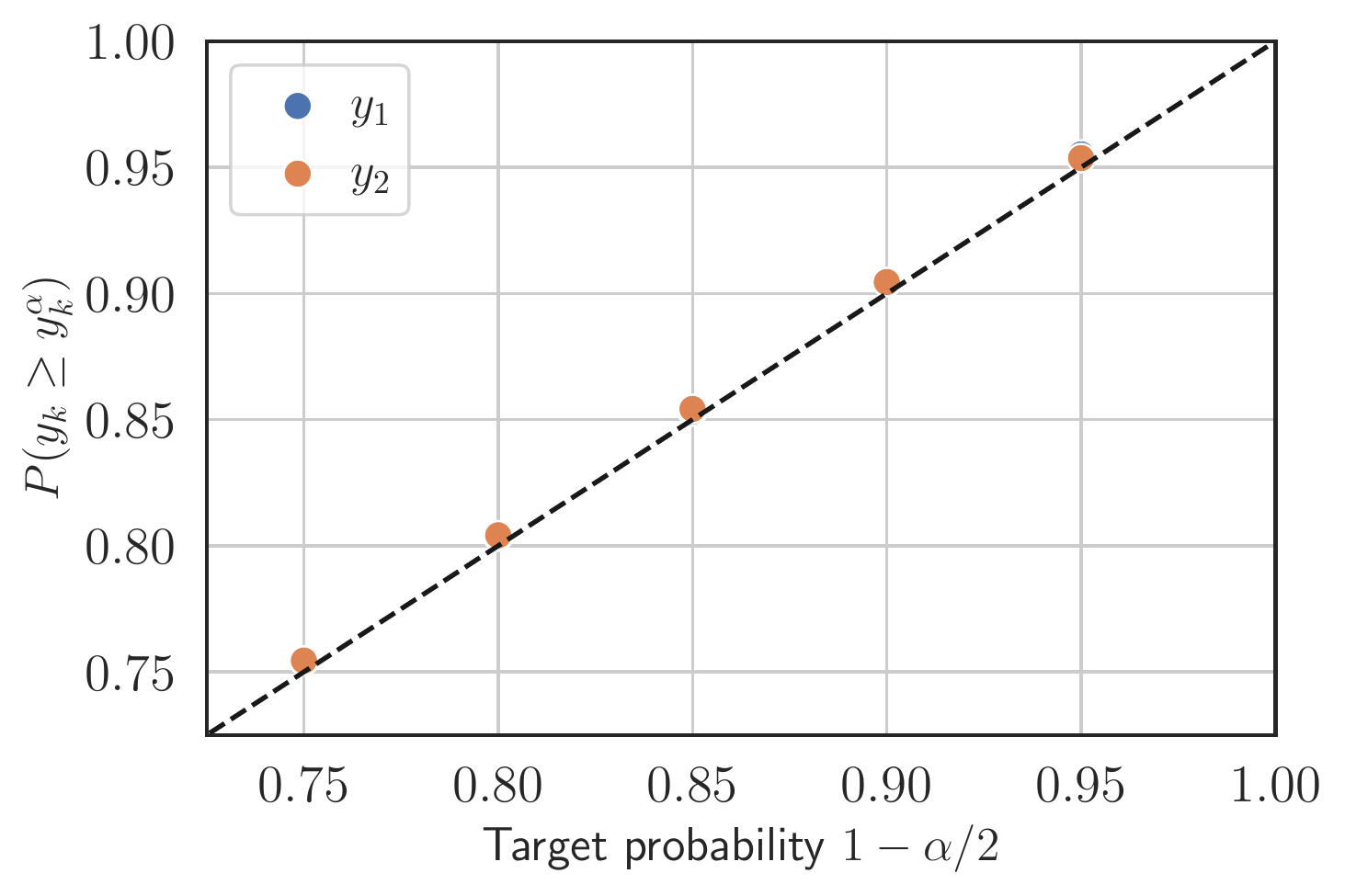}
        \caption{}
        \label{fig:synthetic_single_known}
    \end{subfigure}
    \begin{subfigure}[b]{0.49\textwidth}
        \centering
        \includegraphics[width=\textwidth]{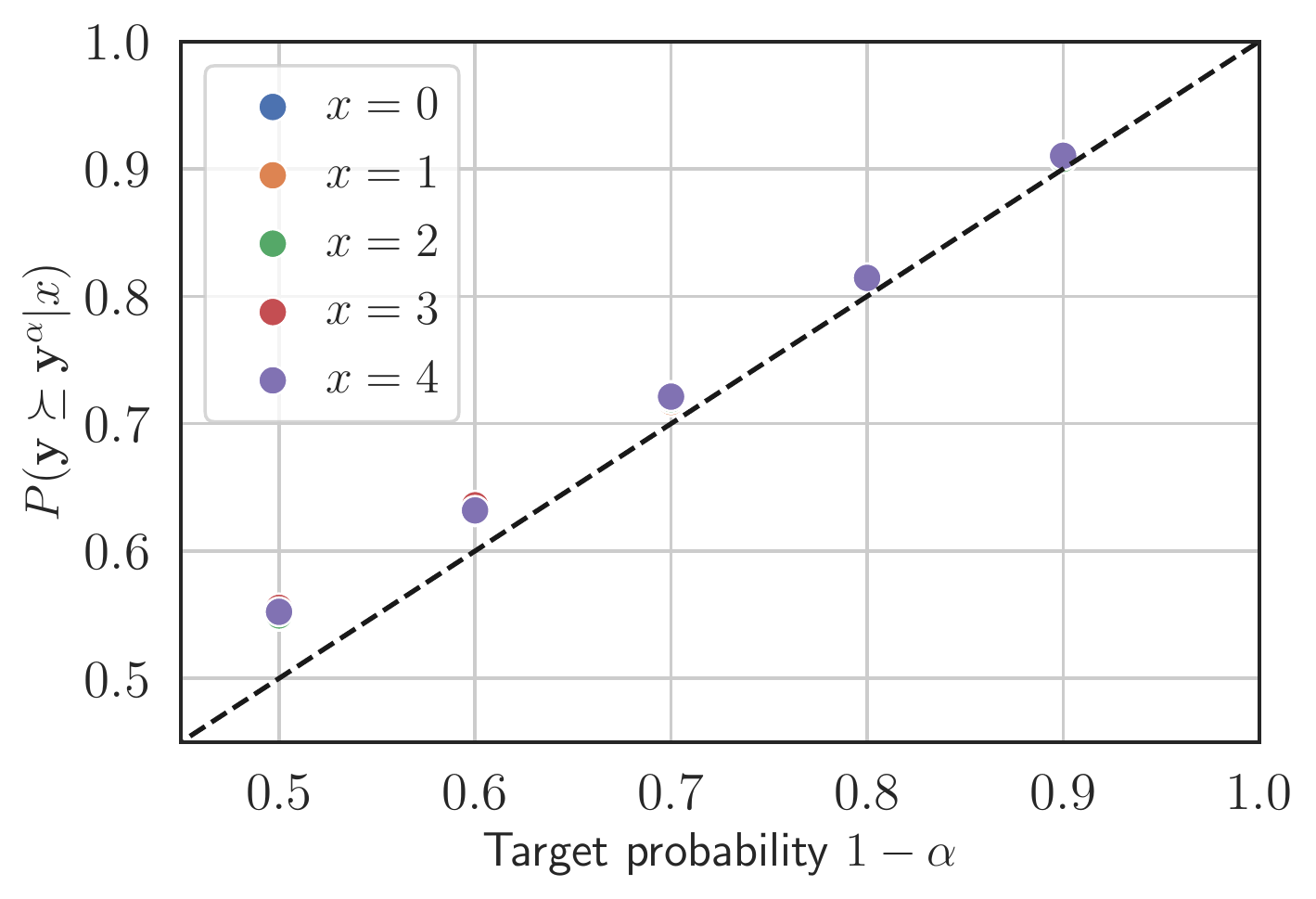}
        \caption{}
         \label{fig:synthetic_union_known}
    \end{subfigure}
    \caption{Evaluation of reward boundary $\yall^{\smallprob}(\dec, \covsc)$ of random rewards $\yall$ using 500 Monte Carlo simulations. Boundaries are designed to satisfy \eqref{eq:goal}. Experimental data scenario with known random assignment probabilities $\probtest(\dec)$. (a) Probability of $\y_k$ exceeding the boundary $\bound{\y_k}(\dec, \covsc)$ marginalized across decisions. (b)  Probability of $\yall$ exceeding the reward boundary $\yall^\alpha(\dec, \covsc)$ under decision $\dec$. Note that the dots are  overlapping in both figures.}
\label{fig:coverage_experimental}
\end{figure*}

\begin{figure*}
    \centering
    \begin{subfigure}[b]{0.49\textwidth}
        \centering
        \includegraphics[width=\textwidth]{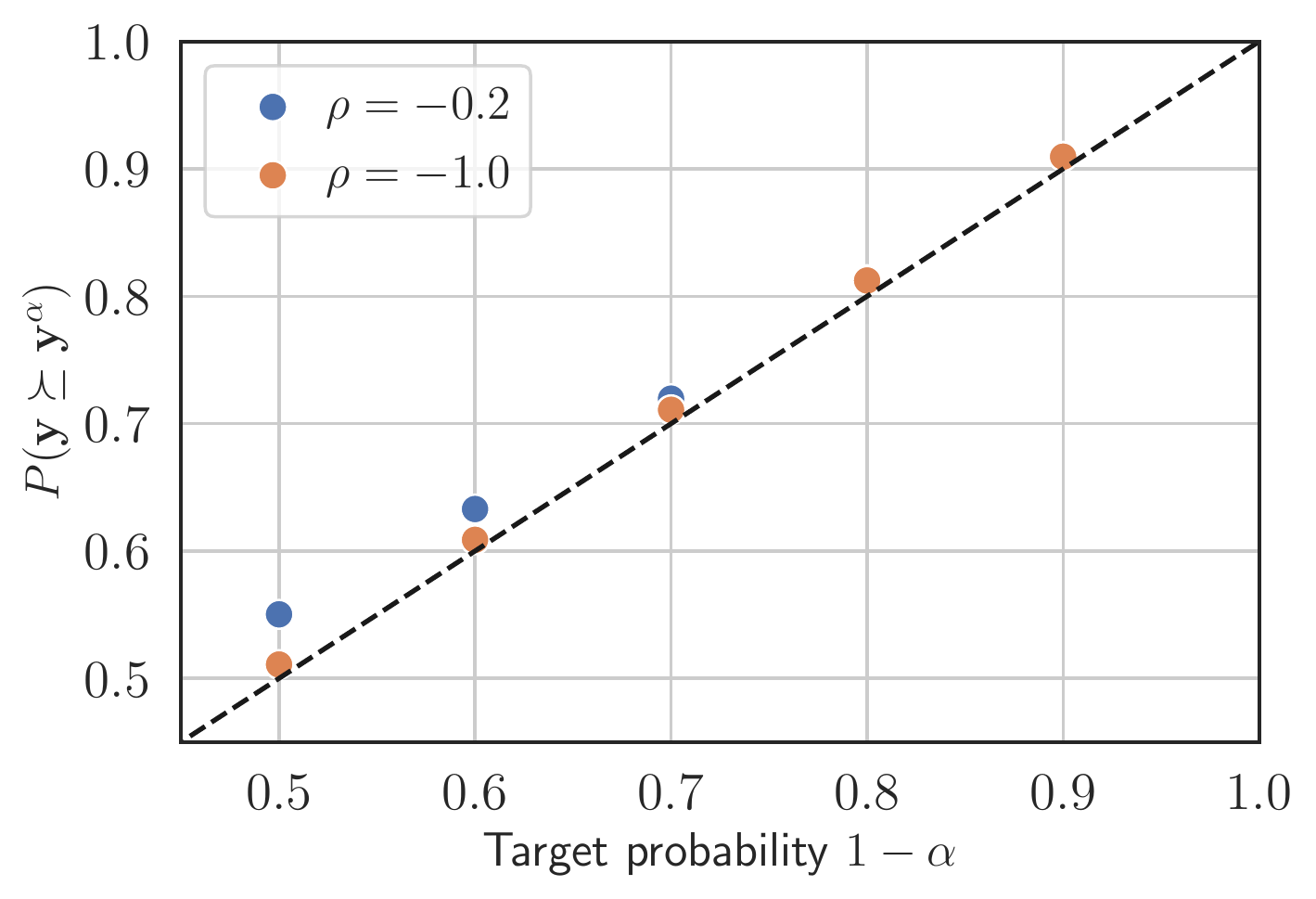}
        \caption{}
        \label{fig:coverage_observational_rho}
    \end{subfigure}
    \begin{subfigure}[b]{0.49\textwidth}
        \centering
        \includegraphics[width=\textwidth]{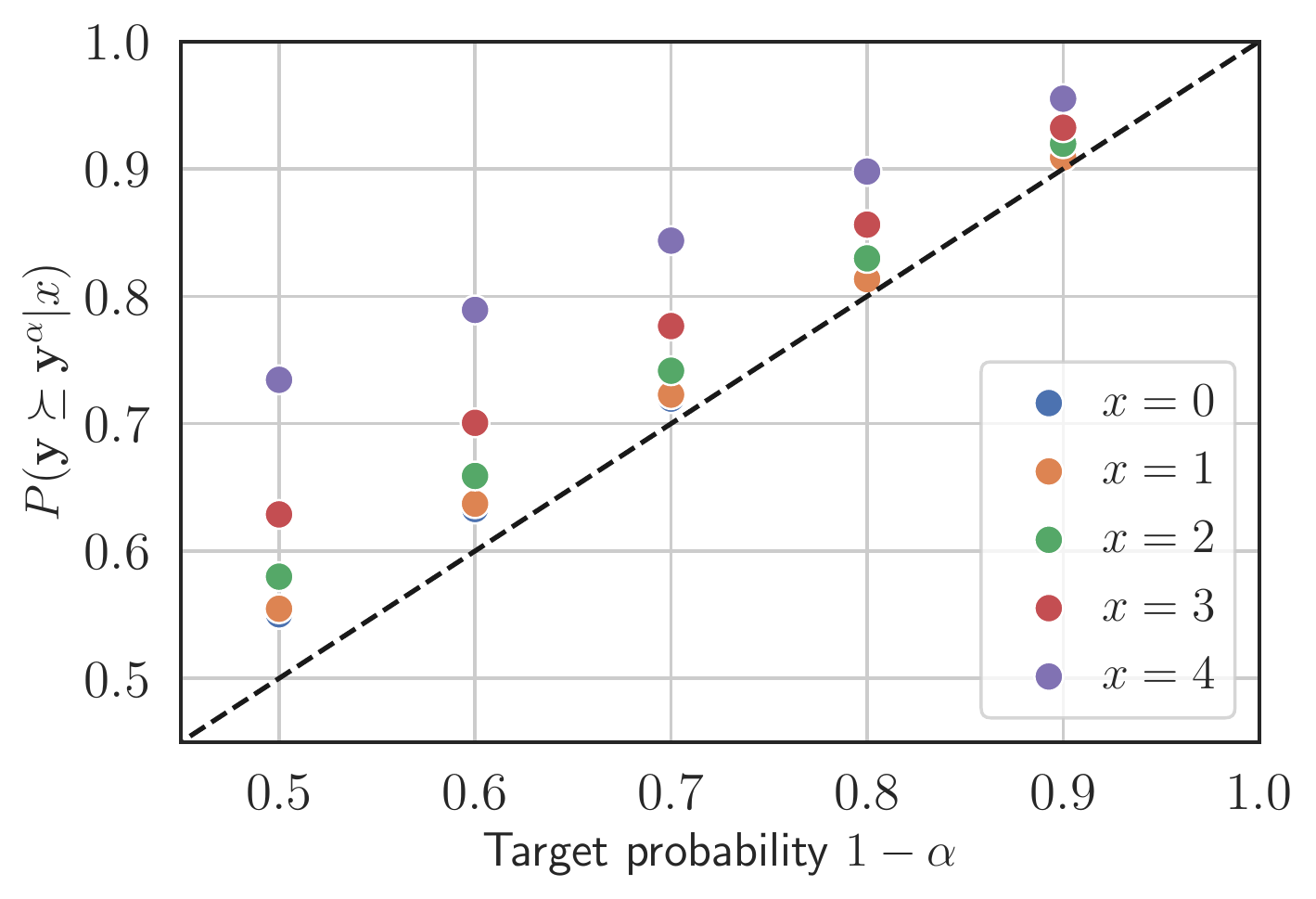}
        \caption{}
         \label{fig:coverage_observational_per_x}
    \end{subfigure}
    \caption{Evaluation of reward boundary $\yall^{\smallprob}(\dec, \covsc)$ of random rewards $\yall$ using 500 Monte Carlo simulations. Observational data scenario where $\probtest(\covsc|\dec)$ in \eqref{eq:weightfunction_alternative} is estimated. (a) Probability of rewards exceeding the boundaries for two values of $\rho$ in \eqref{eq:rewards}, marginalized across decisions. (b)  Probability of $\yall$ exceeding the reward boundary $\yall^\alpha(\dec, \covsc)$ under decision $\dec$.  Note that several dots overlap in both figures.}
\label{fig:coverage_observational}
\end{figure*}

\begin{figure*}
    \centering
    \begin{subfigure}[b]{0.45\textwidth}
        \centering
        \includegraphics[width=1\textwidth]{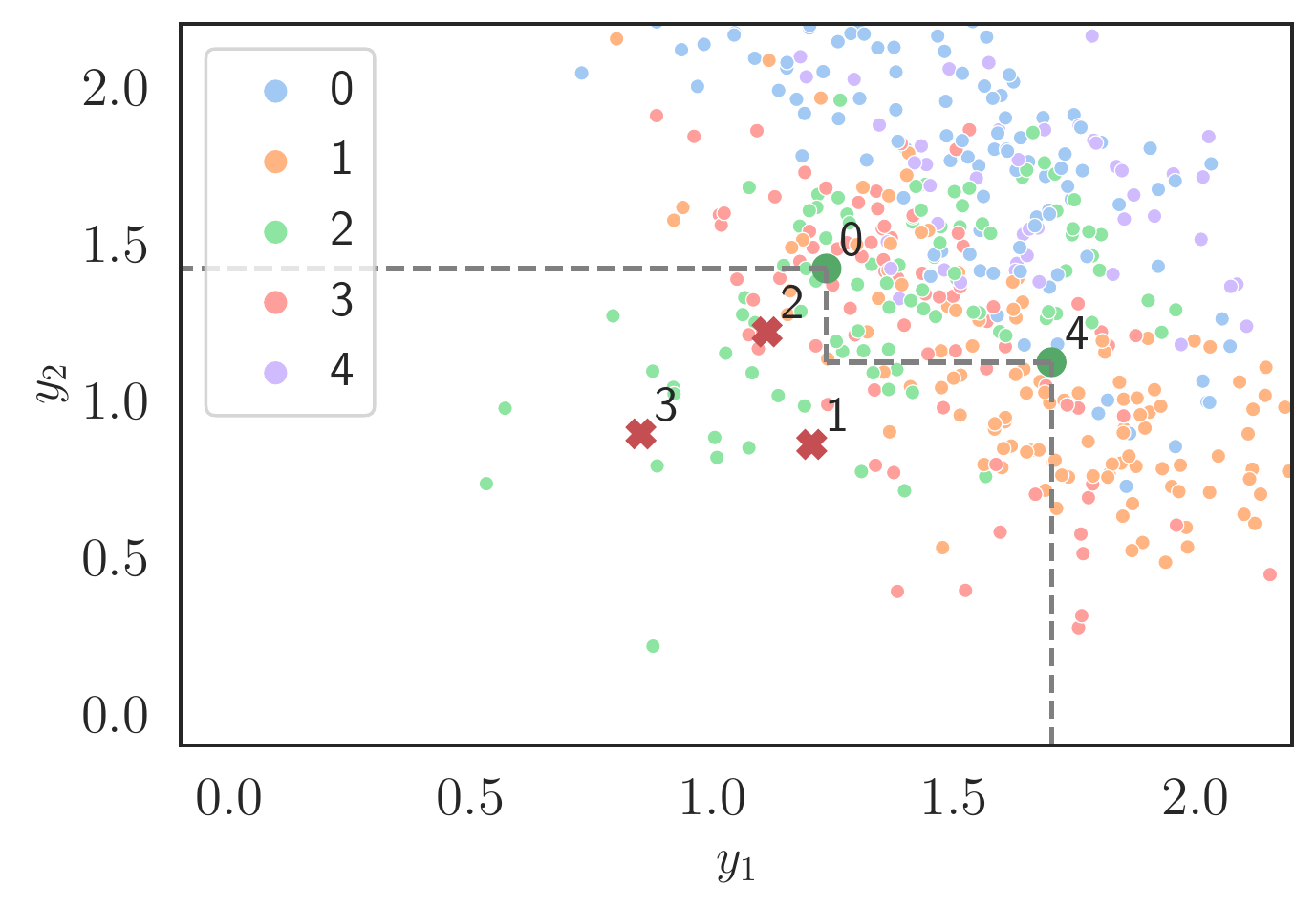} 
        \caption{$z = 56$.}
        \label{fig:4.1.1a}
    \end{subfigure}
    \hfill
    \begin{subfigure}[b]{0.45\textwidth}
        \centering
        \includegraphics[width=1\textwidth]{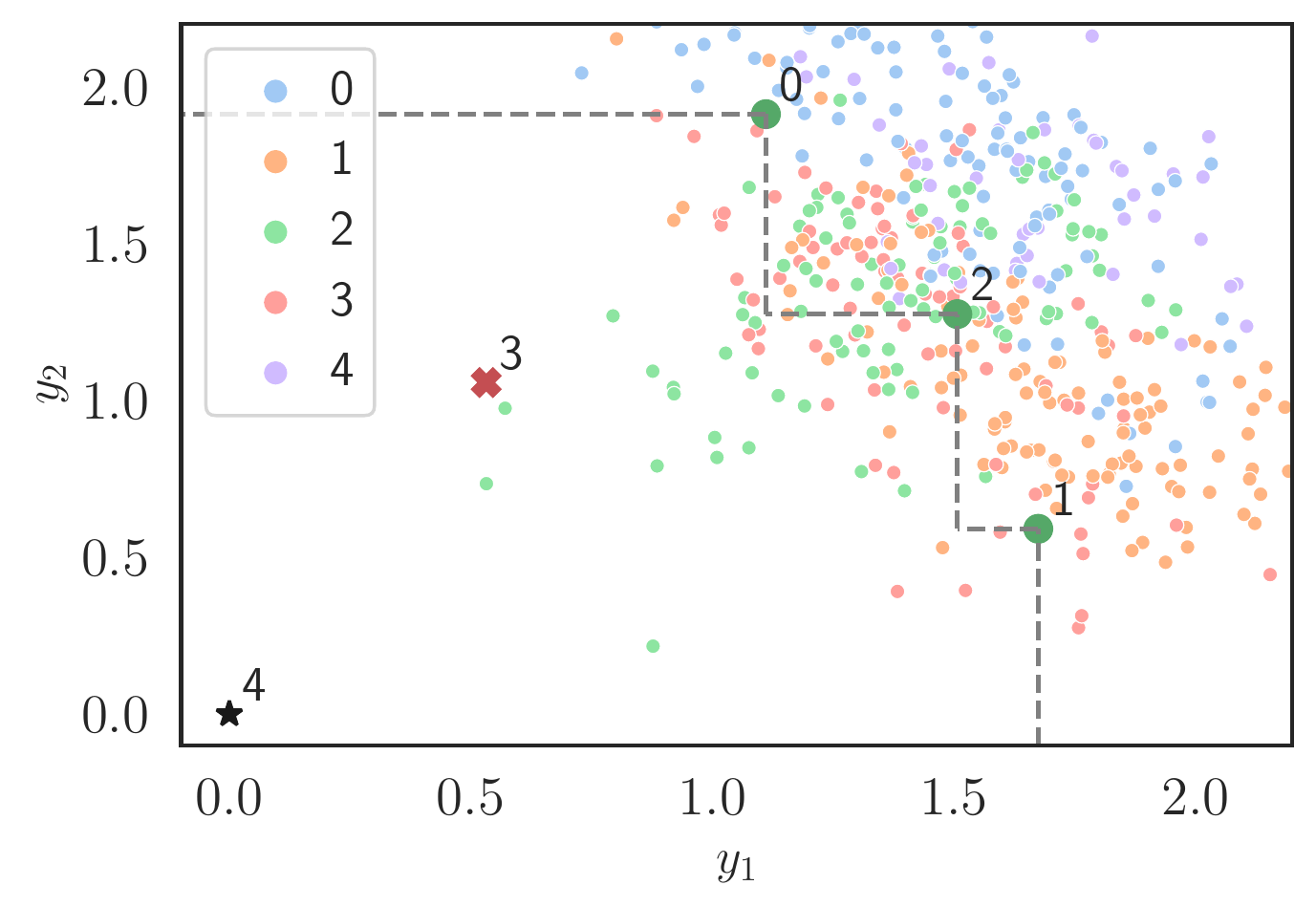} 
        \caption{$z = 68$.}
    \label{fig:4.1.1b}
    \end{subfigure}
    \caption{Efficient decisions with confidence 80\% in the observational data scenario for two different contextual covariates (a) and (b). In (a) the efficient set is $\decset^{\smallprob}(\covsc) = \{ 0, 4 \}$ and decisions 1, 2 and 3 are thus strictly dominated, while in (b) efficient set is $\decset^{\smallprob}(\covsc) = \{ 0, 1, 2 \}$ and decisions 3 and 4 are strictly dominated. In (b) there was no data for decision $\dec = 4$ and the method adapts by setting the minimum possible reward as a boundary (highlighted as a black star). The dashed lines illustrate the frontiers $\yset^{\smallprob}(\covsc)$ for each context.}
    \label{fig:4.1.1}
\end{figure*}

\subsection{Real data}
\label{sec:real_data}
Next, we consider real-world data from the Tennessee Student/Teacher Achievement Ratio (STAR) study \citep{DVN/SIWH9F_2008, krueger1999experimental}. In a four-year class size study teachers and students (in kindergarten up until third grade) were randomly assigned into one of three interventions: small, regular and regular-with-aide class size. Small class size corresponds to 13 to 17 students per teacher, regular class size to 22 to 25 students per teacher and regular-with-aide class size to 22 to 25 students per teacher plus a full-time teacher assistant.  The study was thus a randomized controlled trial, with interventions assigned by a given random policy $\probtest(\dec)$.

We follow the procedure in \cite{kallus2018removing} and take the class-type at first grade as the treatment, since many students were not part of the study in kindergarten. We use 11 pre-intervention covariates in $\cov$ for each student including: gender, birth month, teacher career, teacher experience and if free lunch was given or not. A more detailed description of the covariates is provided in the supplementary material. 

Outcome $\yI$ is an achievement test score at the end of first grade (the sum of standardized math, reading and listening test scores). Students that were not part of the STAR study in first grade or had missing outcome $\yI$ are excluded, and the remaining data set includes 6322 students from 75 different schools. To study a second outcome $\yII$, the (negative) cost per student, we generate a synthetic value following \cite{hill2011bayesian}
\begin{align}
    \yII | \dec &= 0 \sim \normal (-(\cov^{\transpose} \beta - \omega_0 + \mu), \sigma^2), \\
    \yII | \dec &= 1 \sim \normal (-(\exp[(\cov + 0.5)^{\transpose} \beta] + \mu), \sigma^2), \\
    \yII | \dec &= 2 \sim \normal (-(\cov^{\transpose} \beta - \omega_2 + \mu), \sigma^2),
\end{align}
where $\mu = 10$ and $\sigma = 1$. The parameter $\beta$ is drawn randomly from $(0.0, 0.1, 0.2, 0.3, 0.4)$ with probabilities $(0.4, 0.15, 0.15, 0.15, 0.15)$. These probabilities are changed compared to \cite{hill2011bayesian}, since our model has fewer covariates. We select $\omega_0$ and $\omega_2$ so that the effect of the `treatment on the treated' is 4 and 2, respectively.   

The conditional quantiles are estimated with a quantile neural net \citep{taylor2000quantile, steinwart2011estimating}. 



The learned Pareto-frontier for a group of students with a given set of covariates is shown in Fig.~\ref{fig:4.2.1a}. Examples of two such contextual covariates are school in rural areas and teachers with one year teaching experience. Here no decision is dominated at the confidence level 80\%, but the trade-offs between test scores and costs become transparent: small class sizes yield the best test scores but are the most costly interventions, while regular class sizes with aide increase test scores by a lesser amount but at a substantially lower cost. A learned Pareto-frontier for a different group of students is shown in Fig.~\ref{fig:4.2.1b}. Their covariates differ for instance by school located in suburban areas and teachers with 10 years teaching experience. Here the decision 'regular-with-aide' class size is dominated by the other two decisions. 

In the supplementary material, the statistical guarantee for the boundary of the synthetic reward $\yII$ is validated.

\begin{figure*}
    \centering
    \begin{subfigure}[b]{0.45\textwidth}
        \centering
        \includegraphics[width=1\linewidth]{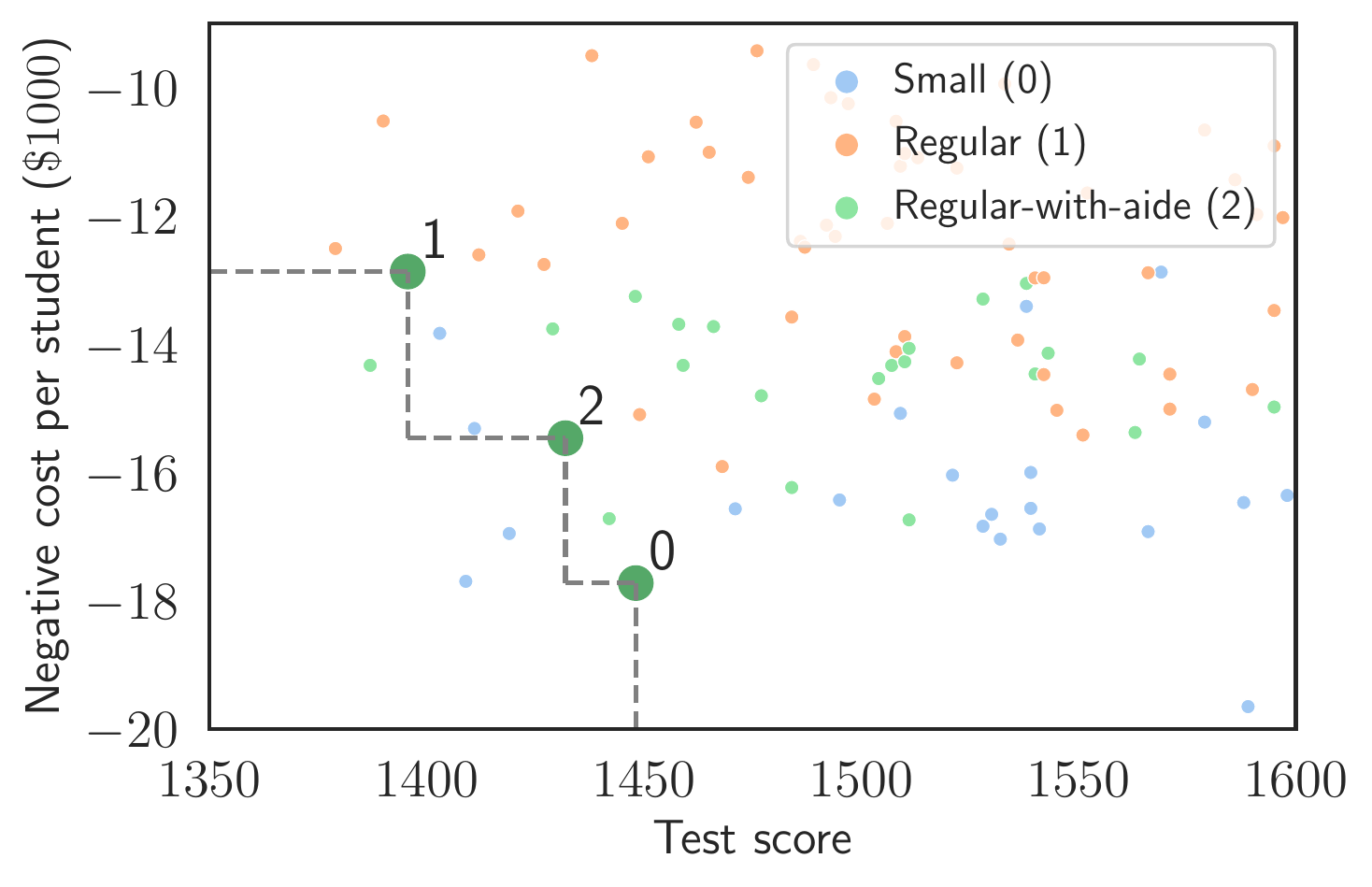} 
        \caption{$\cov'$}
        \label{fig:4.2.1a}
    \end{subfigure}
    \hfill
    \begin{subfigure}[b]{0.45\textwidth}
        \centering
        \includegraphics[width=1\textwidth]{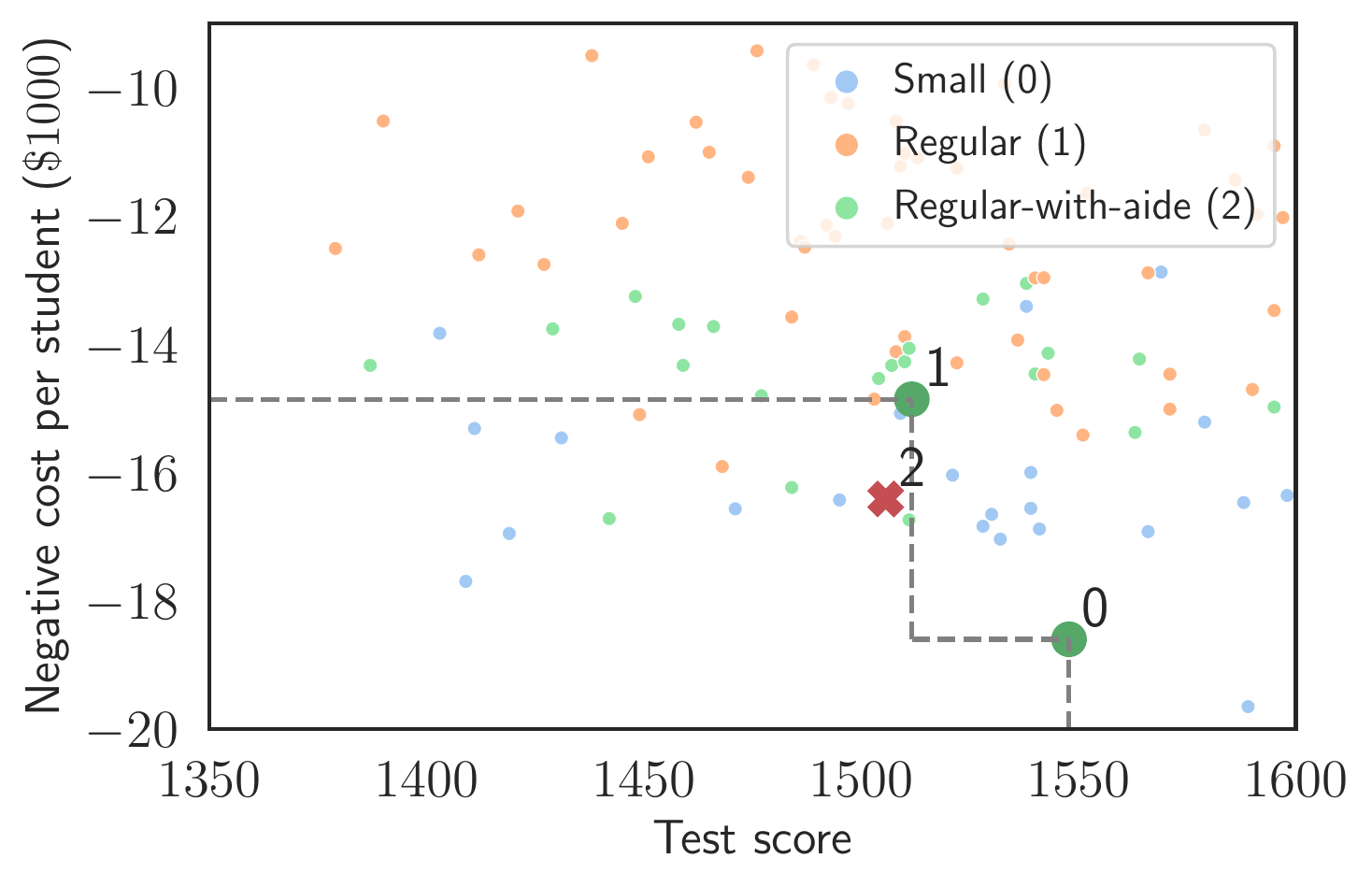}
        \caption{$\cov''$}
        \label{fig:4.2.1b}
    \end{subfigure}
    \caption{Frontiers $\ypareto$ for student groups with two different sets of covariates learned from real data with three different class size interventions. (a) All decisions are efficient so that $\decset^\alpha(\cov')=\{  0,1,2\}$. (b) Efficient decisions are $\decset^\alpha(\cov'')=\{ 0,1\}$ since intervention 2 is dominated by 1. Confidence level 80\%.}
\end{figure*}

\section{DISCUSSION}

Machine learning methods are typically designed to maximize a single objective. When used for decision making, they can therefore result in unbalanced outcomes that do not align with the objectives of a decision-maker \citep{christian2020alignment}. In binary classification tasks, for example, a user may need to balance false-positive and false-negative rates \citep{tong2018neyman} and in recommendation systems is the balance between welfare and profit important \citep{rolf2020balancing}.

In safety-critical applications, such as clinical decision support, it is also important to account for low (even negative) rewards and provide statistical guarantees. In this paper we extended the concept of Pareto-efficiency to include statistical confidence. We then proposed a method that learns efficient decisions and a frontier that takes into account the lower tail rewards at any specified confidence level. Such a frontier can quantify trade-offs and  provide valuable insight to decision makers due to its finite-sample statistical guarantees.

In the case when training data is observational rather than experimental, the (unknown) past decision policy must also be learned. Here care must be taken to assess the accuracy of any  policy model using validation methods; inaccurate models invalidate the statistical properties of the learned frontier and may lead to erroneous decisions. It is also important to consider systematic differences in the distribution of contextual covariates during training and intervention. While any known covariate shift can readily be incorporated by adjusting the weights, unknown shifts can produce misleading results.




\subsubsection*{Acknowledgments}
This research was supported by the Wallenberg AI, Autonomous Systems and Software Program (WASP) funded by Knut and Alice Wallenberg Foundation.

\bibliographystyle{plainnat} 
\bibliography{library}

\end{document}